\newtheorem{theorem}{Theorem}
\newtheorem{lemma}{Lemma}
\newtheorem{case}{Case}
\newcommand{\xSrc}{\vec{x}}
\newcommand{\ySrc}{y}
\newcommand{\xTar}{\vec{u}}
\newcommand{\wSrc}{\vec{w}}
\title{Return of Frustratingly Easy Domain Adaptation}
\author{
Baochen Sun\\
Department of Computer Science\\
University of Massachusetts Lowell\\
Lowell, MA 01854, USA \\
\texttt{bsun@cs.uml.edu} \\
\And
Jiashi Feng\\
Department of EECS, UC Berkeley,\\
USA \& Department of ECE, National\\
University of Singapore,  Singapore\\
\texttt{elefjia@nus.edu.sg} \\
\And
Kate Saenko\\
Department of Computer Science\\
University of Massachusetts Lowell\\
Lowell, MA 01854, USA \\
\texttt{saenko@cs.uml.edu} \\
}
\begin{document}
\nocopyright 
\frenchspacing
\maketitle

\begin{abstract}
Unlike human learning, machine learning often fails to handle changes between training (source) and test (target) input distributions. Such domain shifts, common in practical scenarios, severely damage the performance of conventional machine learning methods. 
Supervised domain adaptation methods have been proposed for the case when the target data have labels, including some that perform very well despite being ``frustratingly easy'' to implement. However, in practice, the target domain is often unlabeled, requiring unsupervised adaptation.
We propose a simple, effective, and efficient method for unsupervised domain adaptation called CORrelation ALignment (CORAL). CORAL minimizes domain shift by aligning the second-order statistics of source and target distributions, without requiring any target labels. 
Even though it is extraordinarily simple--it can be implemented in four lines of Matlab code--CORAL performs remarkably well in extensive evaluations on standard benchmark datasets.
\end{abstract}

\epigraph{``Everything should be made as simple as possible, but not simpler."}{\textit{Albert Einstein}}
\section{Introduction}
\label{sec:intro}
Machine learning is very different from human learning. Humans are able to learn from very few labeled examples and apply the learned knowledge to new examples in novel conditions. In contrast, supervised machine learning methods only perform well when the given extensive labeled data are from the same distribution as the test distribution. Both theoretical \cite{bendavid,Blitzer07Biographies} and practical results~\cite{saenko2010adapting,efros-cvpr11} have shown that the test error of supervised methods generally increases in proportion to the ``difference'' between the distributions of training and test examples. For example, \citeauthor{decaf} \shortcite{decaf} showed that even state-of-the-art Deep Convolutional Neural Network features learned on a  dataset of $1.2M$ images are susceptible to domain shift.
Addressing domain shift is undoubtedly critical for successfully applying machine learning methods in real world applications.

To compensate for the degradation in performance due to domain shift, 
many domain adaptation algorithms have been developed, most of which assume that some labeled examples in the target domain are provided to learn the proper model adaptation. \citeauthor{daume} \shortcite{daume} 
proposed a supervised domain adaptation approach notable for its extreme simplicity: it merely changes the features by making domain-specific and common copies, then trains a supervised classifier on the new features from both domains. The method performs very well, yet is ``frustratingly easy'' to implement. However, it cannot be applied in the situations where the target domain is unlabeled, which unfortunately are quite common in practice.

In this work, we present a ``frustratingly easy'' \emph{unsupervised} domain adaptation method called CORrelation ALignment (CORAL). CORAL aligns the input feature distributions of the source and target domains by exploring their second-order statistics. More concretely, CORAL aligns the distributions by re-coloring whitened source features with the covariance of the target distribution. CORAL is simple and  efficient, as the only computations it needs are (1) computing covariance statistics in each domain and (2) applying the whitening and re-coloring linear transformation to the source features. Then, supervised learning proceeds as usual--training a classifier on the transformed source features. 

\begin{figure}[t]
\centering
\includegraphics[width=\linewidth]{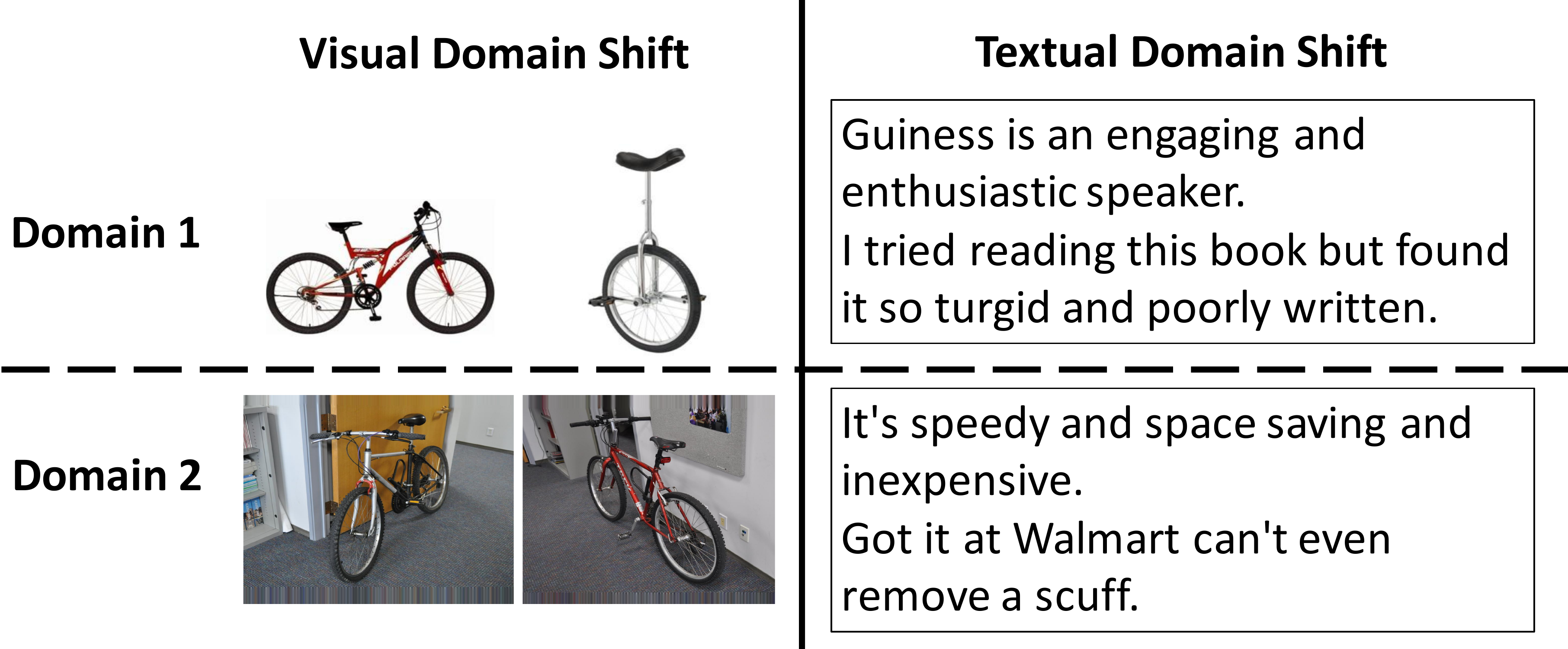}
\vspace{-0.2in}
\caption{\small Two Domain Shift Scenarios: object recognition across visual domains (left) and sentiment prediction across text domains (right). When data distributions differ across domains, applying classifiers trained on one domain directly to another domain is likely to cause a significant performance drop. }
\label{fig:shift}
\vspace{-0.1in}
\end{figure}

Despite being ``frustratingly easy'', CORAL offers surprisingly good performance on standard adaptation tasks. We apply it to two tasks: object recognition and sentiment prediction (Figure~\ref{fig:shift}), and show that it outperforms many existing methods. For object recognition, we demonstrate that it works well with both standard ``flat" bag-of-words features and with state-of-the-art deep CNN features~\cite{alexnet}, 
outperforming existing methods, including recent deep CNN adaptation approaches~\cite{tzeng_arxiv15,reversegrad,dan_long15}.
The latter approaches are quite complex and expensive, requiring re-training of the network and tuning of many hyperparameters such as the structure of the hidden adaptation layers. In contrast, CORAL only needs to compute the covariance of the source and target features. 
\section{Related Work}
\label{sec:related}

Domain shift is a fundamental problem in machine learning, and has also attracted a lot of attention in the speech, natural language and vision communities~\cite{Blitzer07Biographies,gopalan-iccv11,bmvc}.
For supervised adaptation, a variety of techniques have been proposed. Some consider the source domain as a prior that regularizes the learning problem in the sparsely labeled target domain, e.g.,~\cite{yang_icdm07}. Others minimize the distance between the target and source domains, either by re-weighting the domains or by changing the feature representation according to some explicit distribution distance metric~\cite{mmd}. Some learn a transformation on features using a contrastive loss~\cite{saenko2010adapting}. 
Arguably the simplest and most prominent supervised approach is the ``frustratingly easy'' feature replication~\cite{daume}. 
Given a feature vector $\bm{x}$, it defines the augmented feature vector $\tilde{\bm{x}} = (\bm{x}; \bm{x}; \bm{0})$ for data points in the source and $\tilde{\bm{x}} = (\bm{x}; \bm{0}; \bm{x})$ for data points in the target. A classifier is then trained on augmented features. This approach is simple, however, it requires labeled target examples, which are often not available in real world applications.

Early techniques for unsupervised adaptation consisted of re-weighting the training point losses to more closely reflect those in the test distribution~\cite{jiang-zhai07,huang_nips06}. Dictionary learning methods~\cite{dict_1,dict_2} try to learn a dictionary where the difference between the source and target domain is minimized in the new representation.
Recent state-of-the-art unsupervised approaches~\cite{gopalan-iccv11,gfk,long_cvpr,Sp_CVPR15} have pursued adaptation by projecting the source and target distributions into a lower-dimensional manifold, and finding a transformation that brings the subspaces closer together. Geodesic methods find a path along the subspace manifold, and either project source and target onto points along that path~\cite{gopalan-iccv11}, or find a closed-form linear map that projects source points to target~\cite{gfk}. Alternatively, the subspaces can be aligned by computing the linear map that minimizes the Frobenius norm of the difference between them~\cite{outlooks,sasb}. However, these approaches only align the bases of the subspaces, not the distribution of the projected points. They also require expensive subspace projection and hyperparameter selection. 

Adaptive deep neural networks have recently been explored for unsupervised adaptation. DLID~\cite{chopra2013dlid} trains a joint source and target CNN architecture, but is limited to two adaptation layers. ReverseGrad~\cite{reversegrad}, DAN~\cite{dan_long15}, and DDC~\cite{tzeng_arxiv15} directly optimize the deep representation for domain invariance, using additional loss layers designed for this purpose. Training with this additional loss is costly and can be sensitive to initialization, network structure, and other optimization settings. Our approach, applied to deep features (top layer activations), achieves better or comparable performance to these more complex methods, and can be incorporated directly into the network structure.
\section{Correlation Alignment for Unsupervised Domain Adaptation}
\label{sec:methods}

\begin{figure}[t]
\centering
\includegraphics[width=\linewidth]{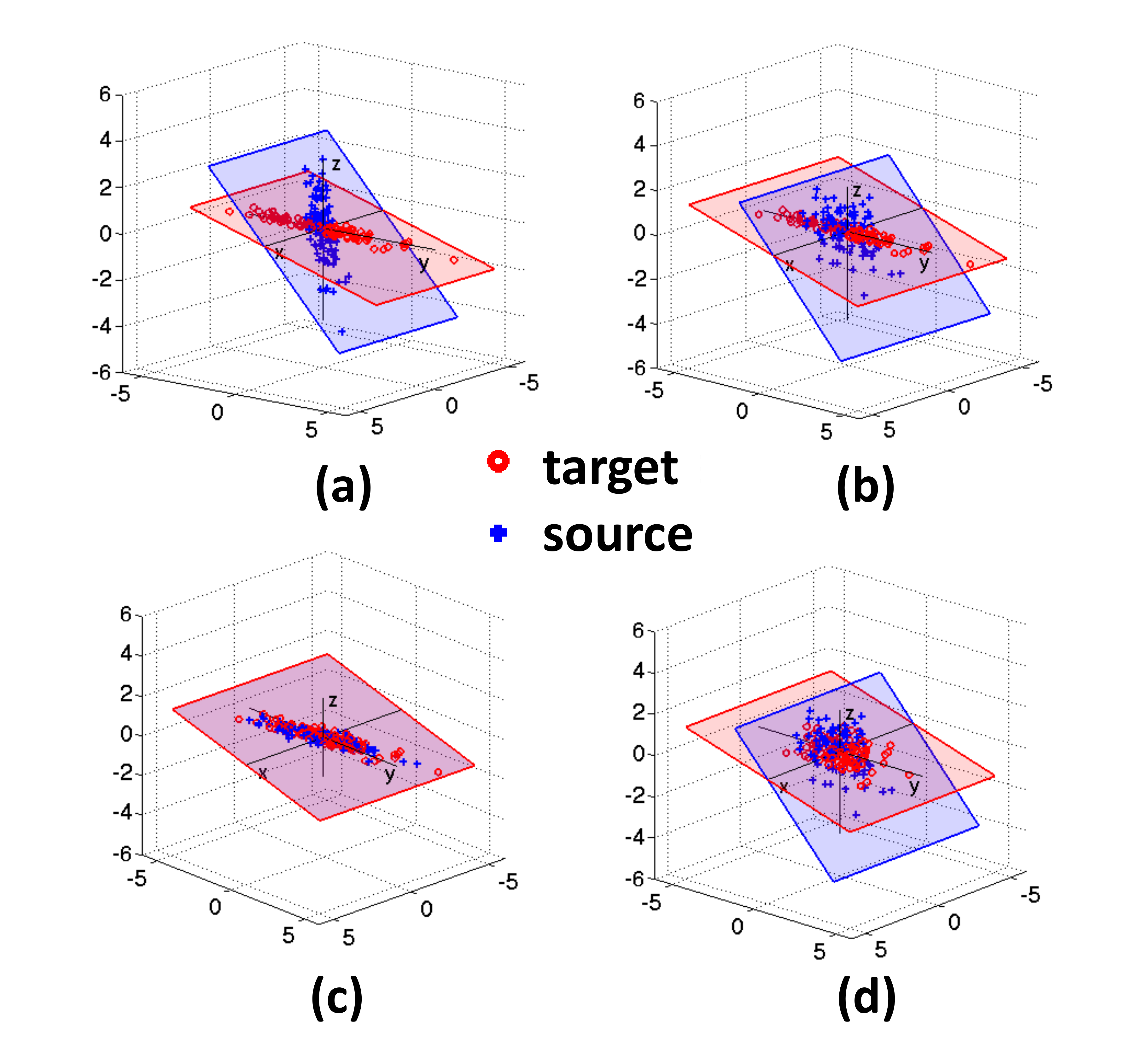}
\vspace{-0.3in}
\caption{\small \textbf{(a-c)} Illustration of CORrelation ALignment (CORAL) for Domain Adaptation: (a) The original source and target domains have different distribution covariances, despite the features being normalized to zero mean and unit standard deviation. This presents a problem for transferring classifiers trained on source to target. (b) The same two domains after source decorrelation, i.e. removing the feature correlations of the source domain. (c) Target re-correlation, adding the correlation of the target domain to the source features. After this step, the source and target distributions are well aligned and the classifier trained on the adjusted source domain is expected to work well in the target domain. \textbf{(d)} One might instead attempt to align the distributions by whitening both source and target. However, this will fail since the source and target data are likely to lie on different subspaces due to domain shift. (Best viewed in color)}
\label{fig:variance}
\vspace{-0.2in}
\end{figure}

We present an extremely simple domain adaptation method--CORrelation ALignment (CORAL)--which works by aligning the distributions of the source and target features in an unsupervised manner. 
We propose to match the distributions by aligning the second-order statistics, namely, the covariance. 

\subsection{Formulation and Derivation}
We describe our method by taking a multi-class classification problem as the running example. Suppose we are given source-domain training examples $D_S=\{\xSrc_i\}$, $\xSrc\in\mathbb{R}^D$ with labels $L_S=\{\ySrc_i\}$, $\ySrc \in\{1,...,L\}$, and target data $D_T=\{\xTar_i\}$, $\xTar \in \mathbb{R}^D$. Here both $\xSrc$ and $\xTar$ are the $D$-dimensional feature representations $\phi(I)$ of input $I$. Suppose $\mu_s,\mu_t$ and $C_{S}, C_{T}$ are the feature vector means and covariance matrices. As illustrated in Figure~\ref{fig:variance}, $\mu_t=\mu_s=0$ after feature normalization while $C_{S} \neq C_{T}$.

To minimize the distance between the second-order statistics (covariance) of the source and target features, we apply a linear transformation $A$ to the original source features and use the Frobenius norm as the matrix distance metric:
      \begin{equation}
      \begin{aligned}
      &~\underset{A}{\min} {\| C_{\hat{S}} - C_{T} \|}^2_F\\
      &= \underset{A}{\min} {\| A^{\top}C_{S}A - C_{T} \|}^2_F
      \end{aligned}
      \label{eq:obj}
      \end{equation}
where $C_{\hat{S}}$ is covariance of the transformed source features $D_sA$ and ${\|\cdot\|}^2_F$ denotes the matrix Frobenius norm. 

If $\mathrm{rank}(C_S) \geq \mathrm{rank}(C_T)$, then an analytical solution can be obtained by choosing $A$ such that $C_{\hat{S}}= C_{T}$.
However, the data typically lie on a lower dimensional manifold~\cite{outlooks,gfk,sasb}, and so the covariance matrices are likely to be low rank~\cite{who}. We derive a solution for this general case, using the following lemma.
\begin{lemma}\cite{SVT} 
\label{lemma:svt}
Let $Y$ be a real matrix of rank $r_Y$ and X a real matrix of rank at most $r$, where ${r}\leqslant{r_Y}$; let $Y={U_Y}{\Sigma_Y}{V_Y}$ be the SVD of $Y$, and ${\Sigma_{Y[1:r]}}$, $U_{Y[1:r]}$, $V_{Y[1:r]}$ be the largest $r$ singular values and the corresponding left and right singular vectors of $Y$ respectively. Then, $X^{*} = U_{Y[1:r]}{\Sigma_{Y[1:r]}}{V_{Y[1:r]}}^{\top}$ is the optimal solution to the problem of $\underset{X}\min{\| X - Y \|}^2_F$.
\end{lemma}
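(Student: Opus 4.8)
The statement is the classical Eckart--Young--Mirsky theorem: the rank-$r$ truncation of the SVD is the best rank-$r$ approximation of $Y$ in Frobenius norm. The plan is to exploit the orthogonal invariance of the Frobenius norm to reduce to a clean variational problem over the column space of the candidate $X$, and then invoke a trace-maximization principle. First I would write the SVD $Y = U_Y \Sigma_Y V_Y^{\top}$ and note that, because left and right multiplication by orthogonal matrices preserves the Frobenius norm, $\| X - Y \|_F = \| U_Y^{\top} X V_Y - \Sigma_Y \|_F$ for every $X$. The substitution $\tilde{X} = U_Y^{\top} X V_Y$ is a rank-preserving bijection on the set of matrices of rank at most $r$, so it suffices to minimize $\| \tilde{X} - \Sigma_Y \|_F^2$ against the diagonal matrix $\Sigma_Y$.

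Next I would decompose the optimization over rank-$\le r$ matrices into a choice of column space followed by a choice of the matrix within it. Writing a rank-$\le r$ candidate as $X = Q M$ with $Q \in \mathbb{R}^{D \times r}$ having orthonormal columns, the optimal $M$ for a fixed column space $\mathrm{range}(Q)$ is given by the orthogonal projection, i.e. $X = Q Q^{\top} Y$; this is a one-line least-squares fact (the residual $Y - QQ^\top Y$ is orthogonal to $\mathrm{range}(Q)$). Substituting and using $\| Q Q^{\top} Y \|_F^2 = \mathrm{tr}(Q^{\top} Y Y^{\top} Q)$ gives $\| Y - Q Q^{\top} Y \|_F^2 = \| Y \|_F^2 - \mathrm{tr}(Q^{\top} Y Y^{\top} Q)$, so minimizing the residual is equivalent to maximizing $\mathrm{tr}(Q^{\top} Y Y^{\top} Q)$ over orthonormal $Q$ with $r$ columns.

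The main obstacle is exactly this last trace maximization. The step I expect to require the most care is the Ky Fan maximum principle (equivalently, a Courant--Fischer / Rayleigh-quotient argument applied to the symmetric positive semidefinite matrix $Y Y^{\top}$), which asserts that $\max_{Q^{\top} Q = I_r} \mathrm{tr}(Q^{\top} Y Y^{\top} Q) = \sum_{i=1}^{r} \sigma_i^2$, with the maximum attained when the columns of $Q$ span the top-$r$ left singular vectors $U_{Y[1:r]}$. Everything else is bookkeeping: once this is established, tracing the reduction backward identifies the optimizer as $X^{*} = U_{Y[1:r]} \Sigma_{Y[1:r]} V_{Y[1:r]}^{\top}$ with optimal residual $\sum_{i > r} \sigma_i^2$, and the hypothesis $r \leqslant r_Y$ guarantees the truncation retains $r$ genuinely nonzero singular values, yielding precisely the claimed solution. (As an alternative to the column-space argument, one could instead bound $\| \tilde{X} - \Sigma_Y \|_F^2 \geq \sum_i (\sigma_i(\Sigma_Y) - \sigma_i(\tilde{X}))^2$ via Mirsky's singular-value inequality and note the right side is minimized by zeroing all but the top $r$ singular values; this shifts the difficulty onto proving Mirsky's inequality but leads to the same conclusion.)
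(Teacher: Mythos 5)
Your proposal cannot be matched against an in-paper argument, because the paper does not prove this lemma at all: it is imported wholesale by citation from \cite{SVT}, being the classical Eckart--Young--Mirsky theorem, and its only role is to serve as a black box inside the proof of Theorem~1 (where it justifies taking $C_{\hat S} = U_{T[1:r]}\Sigma_{T[1:r]}U_{T[1:r]}^{\top}$ as the optimal covariance). Judged on its own merits, your proof outline is correct and is one of the two standard routes to the theorem: the orthogonal-invariance reduction $\|X-Y\|_F = \|U_Y^{\top} X V_Y - \Sigma_Y\|_F$ is valid, the split into a choice of column space plus a projection $X = QQ^{\top}Y$ is the right least-squares reduction, the Pythagorean identity $\|Y - QQ^{\top}Y\|_F^2 = \|Y\|_F^2 - \mathrm{tr}(Q^{\top}YY^{\top}Q)$ is correct, and Ky Fan's maximum principle applied to $YY^{\top}$ does identify $U_{Y[1:r]}$ as the optimal span, from which $X^{*} = U_{Y[1:r]}\Sigma_{Y[1:r]}V_{Y[1:r]}^{\top}$ follows by direct computation. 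Two caveats are worth stating. First, the heavy lifting is deferred to Ky Fan's principle (or, in your alternative, to Mirsky's singular-value inequality), which you invoke but do not prove; this makes your argument a reduction to another classical result rather than a self-contained proof, though that is a reasonable trade-off. Second, the minimizer is in general only \emph{an} optimal solution rather than \emph{the} optimal solution---uniqueness fails when $\sigma_r = \sigma_{r+1}$---but the paper's own statement of the lemma has the same looseness, so this is not a defect relative to what is claimed. In short: the paper buys brevity by citing a textbook fact; your route buys self-containedness at the cost of either proving the trace-maximization principle or accepting one citation a level further down.
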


\begin{theorem} 
Let $\Sigma^{+}$ be the Moore-Penrose pseudoinverse of $\Sigma$, $r_{C_S}$ and $r_{C_T}$ denote the rank of $C_S$ and $C_T$ respectively.
Then, $A^{*} = U_{S}{\Sigma_S^{+}}^{\frac{1}{2}}{U_{S}}^{\top} U_{T[1:r]}{\Sigma_{T[1:r]}}^{\frac{1}{2}}{U_{T[1:r]}}^{\top}$ is the optimal solution to the problem in Equation~\eqref{eq:obj} with $r = \min(r_{C_S}, r_{C_T})$.
\end{theorem}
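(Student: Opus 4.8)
The plan is to convert the minimization over the linear map $A$ into a minimization over the realizable transformed covariances $C_{\hat S}=A^{\top}C_S A$, and then read off the answer from Lemma~\ref{lemma:svt}. First I would record that $C_S$ and $C_T$ are covariance matrices, hence symmetric positive semidefinite, so their SVDs coincide with eigendecompositions $C_S=U_S\Sigma_S U_S^{\top}$ and $C_T=U_T\Sigma_T U_T^{\top}$ with $U_S,U_T$ orthogonal and $\Sigma_S,\Sigma_T$ diagonal and nonnegative. This symmetry is precisely what lets me apply the lemma with $Y=C_T$: its rank-$r$ truncation $U_{T[1:r]}\Sigma_{T[1:r]}U_{T[1:r]}^{\top}$ is again positive semidefinite, so the lemma's minimizer stays inside the class of matrices I can realize.

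Second, I would pin down the feasible set. For every $A$ the matrix $C_{\hat S}=A^{\top}C_S A=(C_S^{1/2}A)^{\top}(C_S^{1/2}A)$ is positive semidefinite of rank at most $r_{C_S}$; conversely every positive semidefinite matrix of rank at most $r_{C_S}$ is realized by some $A$, as one checks by factoring it through $\mathrm{range}(C_S)$. Hence problem~\eqref{eq:obj} is equivalent to $\min\|C_{\hat S}-C_T\|_F^2$ over positive semidefinite $C_{\hat S}$ of rank at most $r_{C_S}$. Applying Lemma~\ref{lemma:svt} with $X=C_{\hat S}$, $Y=C_T$ and $r=\min(r_{C_S},r_{C_T})$ yields $C_{\hat S}^{*}=U_{T[1:r]}\Sigma_{T[1:r]}U_{T[1:r]}^{\top}$; because this minimizer is itself positive semidefinite it is feasible, so the positive-semidefiniteness constraint is inactive and $C_{\hat S}^{*}$ solves the constrained problem too.

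Finally, I would verify that the stated $A^{*}$ actually attains $C_{\hat S}^{*}$, which is where the real work sits. Writing $A^{*}=WR$ with whitening factor $W=U_S(\Sigma_S^{+})^{1/2}U_S^{\top}$ and recoloring factor $R=U_{T[1:r]}\Sigma_{T[1:r]}^{1/2}U_{T[1:r]}^{\top}$, the identities $U_S^{\top}U_S=I$ and $(\Sigma_S^{+})^{1/2}\Sigma_S(\Sigma_S^{+})^{1/2}=\Pi$ (the diagonal projector onto the top $r_{C_S}$ coordinates) collapse $W^{\top}C_S W$ to $P_S:=U_S\Pi U_S^{\top}$, the orthogonal projector onto $\mathrm{range}(C_S)$, giving $(A^{*})^{\top}C_S A^{*}=R^{\top}P_S R$. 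Reducing this to $C_{\hat S}^{*}$ amounts to showing $U_{T[1:r]}^{\top}P_S U_{T[1:r]}=I_r$, i.e. the leading $r$ eigenvectors of $C_T$ lie in $\mathrm{range}(C_S)$.

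I expect this range-compatibility condition to be the main obstacle. It is automatic when $r_{C_S}\ge r_{C_T}$ (then $r=r_{C_T}$, $C_{\hat S}^{*}=C_T$, and $A^{*}$ recovers the exact-alignment solution noted just before the lemma, where one can simply force $C_{\hat S}=C_T$). In the genuinely low-rank case $r_{C_S}<r_{C_T}$, however, nothing in the statement forces the dominant target directions into $\mathrm{range}(C_S)$, so this is exactly the step I would have to establish from the low-dimensional-manifold assumption on the features — or else read the conclusion as ``$A^{*}$ attains the optimal objective value under that subspace condition,'' since the optimal \emph{value} is in any case achieved by the feasibility argument of the second paragraph.
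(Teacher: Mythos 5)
Your proposal takes the same route as the paper's proof: reduce Equation~\eqref{eq:obj} to a minimization over the realizable covariances $C_{\hat S}=A^{\top}C_S A$, solve that reduced problem with Lemma~\ref{lemma:svt}, and then recover $A$. You are more careful than the paper in two places. First, you characterize the feasible set exactly (all positive semidefinite matrices of rank at most $r_{C_S}$) and check that the lemma's minimizer lies in it; the paper only observes $r_{C_{\hat S}}\leqslant r_{C_S}$ and applies the lemma without that check. Second, the obstacle you isolate in your final step is genuine, and it is precisely the unjustified move in the paper's own argument: the paper asserts that $U_{T[1:r]}\Sigma_{T[1:r]}U_{T[1:r]}^{\top}$ ``can be re-written as $E^{\top}\Sigma_S E$,'' but expanding gives $E^{\top}\Sigma_S E = U_{T[1:r]}\Sigma_{T[1:r]}^{1/2}\bigl(U_{T[1:r]}^{\top}P_S U_{T[1:r]}\bigr)\Sigma_{T[1:r]}^{1/2}U_{T[1:r]}^{\top}$, where $P_S = U_S(\Sigma_S^{+})^{1/2}\Sigma_S(\Sigma_S^{+})^{1/2}U_S^{\top}$ is the orthogonal projector onto $\mathrm{range}(C_S)$, so the asserted identity holds if and only if your range-compatibility condition $U_{T[1:r]}^{\top}P_S U_{T[1:r]}=I_r$ does. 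The gap you flag is thus a gap in the published proof, not a defect of your reconstruction.

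However, your claim that this condition is automatic when $r_{C_S}\geq r_{C_T}$ is false: a rank inequality does not give a range inclusion. Take $C_S=\mathrm{diag}(1,0)$ and $C_T=\mathrm{diag}(0,1)$, so that $r_{C_S}=r_{C_T}=r=1$. The optimal value of Equation~\eqref{eq:obj} is $0$, attained e.g.\ by any $A$ whose first row is $(0,1)$, since then $A^{\top}C_S A=C_T$. But the theorem's formula gives $A^{*}=\mathrm{diag}(1,0)\cdot\mathrm{diag}(0,1)=0$, with objective value $\|C_T\|_F^2=1$, so the stated $A^{*}$ is not optimal there. Hence the leading target eigenvectors must lie in $\mathrm{range}(C_S)$ in \emph{both} of your cases, and the theorem is simply false without some such hypothesis. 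The cleanest repairs are either to assume $\mathrm{range}(U_{T[1:r]})\subseteq\mathrm{range}(C_S)$ (your subspace condition), or to assume $C_S$ is full rank---which is what the practical Algorithm~\ref{alg:coral} enforces anyway by adding the identity to the covariance; under either hypothesis the verification in your third paragraph closes the proof.
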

\begin{proof}  Since $A$ is a linear transformation, $A^{\top}C_SA$ does not increase the rank of $C_S$. Thus, $r_{C_{\hat{S}}}\leqslant{r_{C_{S}}}$. Since $C_S$ and $C_T$ are symmetric matrices, conducting SVD on $C_S$ and $C_T$ gives $C_S=U_S{\Sigma_S}{U_S}^{\top}$ and $C_T = U_T\Sigma_TU_T^\top$ respectively. We first find the optimal value of $C_{\hat{S}}$ through considering the following two  cases:
\vspace{-0.05in}
\begin{case}
$r_{C_S}>{r_{C_T}}$. The optimal solution is $C_{\hat{S}} = C_T$. Thus, $C_{\hat{S}} = U_{T}{\Sigma_{T}}{U_{T}}^{\top} = U_{T[1:r]}{\Sigma_{T[1:r]}}{U_{T[1:r]}}^{\top}$ is the optimal solution to Equation~\eqref{eq:obj} where $r = r_{C_T}$.
\end{case}
\vspace{-0.05in}
\begin{case}
$r_{C_S}\leqslant{r_{C_T}}$. Then, according to Lemma \ref{lemma:svt}, $C_{\hat{S}} = U_{T[1:r]}{\Sigma_{T[1:r]}}{U_{T[1:r]}}^{\top}$ is the optimal solution to Equation~\eqref{eq:obj} where $r = r_{C_S}$.
\end{case}
Combining the results in the above two cases yields that ${{C_{\hat{S}}}} = U_{T[1:r]}{\Sigma_{T[1:r]}}{U_{T[1:r]}}^{\top}$ is the optimal solution to Equation~\eqref{eq:obj} with $r = \min(r_{C_S}, r_{C_T})$.
We then proceed to solve for $A$ based on the above result. 
Let $C_{\hat{S}} = {A}^{\top}C_S A$, and we get:
\begin{equation*}
{A}^{\top}C_{S}{A}= U_{T[1:r]}{\Sigma_{T[1:r]}}{U_{T[1:r]}}^{\top}.
\end{equation*}
Since $C_S = U_S \Sigma_S {U_S}^\top$, we have
\begin{equation*}
{A}^{\top}U_S{\Sigma_S}{U_S}^{\top}{A}= U_{T[1:r]}{\Sigma_{T[1:r]}}{U_{T[1:r]}}^{\top}.
\end{equation*}
This gives:
\begin{equation*}
{({U_S}^{\top}A)}^{\top}{\Sigma_S}({U_S}^{\top}{A})= U_{T[1:r]}{\Sigma_{T[1:r]}}{U_{T[1:r]}}^{\top}.
\end{equation*}
Let $E ={\Sigma_S^{+}}^{\frac{1}{2}} {U_S}^{\top} {U_{T[1:r]}} {\Sigma_{T[1:r]}}^{\frac{1}{2}}{U_{T[1:r]}}^{\top}$, then the right hand side of the above equation can be re-written as ${E}^{\top}{\Sigma_S}E$. This gives
\begin{align*}
&{({U_S}^{\top}A)}^{\top}{\Sigma_S}({U_S}^{\top}{A})= {E}^{\top}{\Sigma_S}E
\end{align*}
By setting ${U_S}^{\top}A$ to $E$, we get the optimal solution of $A$ as 
\begin{equation} 
\begin{aligned}
A^{*}&={U_S}E\\
&=(U_{S}{\Sigma_S^{+}}^{\frac{1}{2}}{U_{S}}^{\top})(U_{T[1:r]}{\Sigma_{T[1:r]}}^{\frac{1}{2}}{U_{T[1:r]}}^{\top}).
 \end{aligned}
\label{eq:slu}
\end{equation}
\end{proof}

\subsection{Algorithm}
\label{subsec:algo}
We can think of transformation $A$ in this way intuitively: the first part $U_{S}{\Sigma_S^{+}}^{\frac{1}{2}}{U_{S}}^{\top}$ whitens the source data while the second part $U_{T[1:r]}{\Sigma_{T[1:r]}}^{\frac{1}{2}}{U_{T[1:r]}}^{\top}$ re-colors it with the target covariance. This is illustrated in Figure~\ref{fig:variance}(b) and Figure~\ref{fig:variance}(c) respectively. The traditional whitening is adding a small regularization parameter $\lambda$ to the diagonal elements of the covariance matrix to explicitly make it full rank and then multiply the original feature by the inverse square root (or square root for coloring) of it. The whitening and re-coloring here are slightly different from them since the data are likely to lie on a lower dimensional space and the covariance matrices could be low rank. 

In practice, for the sake of efficiency and stability, we can perform the classical whitening and coloring. This is advantageous because: (1) 
it is faster (e.g., the whole CORAL transformation takes less than one minute on a regular laptop for $D_S\in\mathbb{R}^{795\times4096}$ and $D_T\in\mathbb{R}^{2817\times4096}$) and more stable, as SVD on the original covariance matrices might not be stable and might slow to converge; (2) as illustrated in Figure~\ref{fig:sens}, the performance is similar to the analytical solution in Equation~\eqref{eq:slu} and very stable with respect to~$\lambda$. In this paper, we set~$\lambda$ to 1. The final algorithm can be written in four lines of MATLAB code as illustrated in Algorithm~\ref{alg:coral}.
 
\begin{figure}
\centering
\includegraphics[width=0.6\columnwidth]{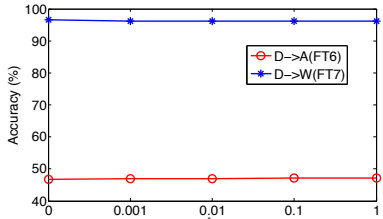}
\vspace{-0.2in}
\caption{\small Sensitivity of Covariance Regularization Parameter~$\lambda$ with~$\lambda \in$ \{0, 0.001, 0.01, 0.1, 1\}. When $\lambda = 0$, there is no regularization and we use the analytical solution in Equation~\eqref{eq:slu}. Please refer to Section~\ref{subsec:recog} for details of tasks.}
\label{fig:sens}
\end{figure}

\begin{algorithm}
\caption{CORAL for Unsupervised Domain Adaptation}
\begin{small}
\begin{algorithmic} 
\STATE \textbf{Input:} Source Data $D_S$, Target Data $D_T$
\STATE \textbf{Output:} Adjusted Source Data $D_{s}^{*}$
\STATE $C_S = cov(D_S) + eye(size(D_S, 2))$
\STATE $C_T = cov(D_T) + eye(size(D_T, 2))$
\STATE $D_S = D_S*C_S^{\frac{-1}{2}}$  ~~~~~~~~~~~~~~~~~~~~~~~~~~~~~~~~~~~~~~~~\% whitening source
\STATE $D_{S}^{*} = D_S*C_T^{\frac{1}{2}}$  ~~~~~~~~~~~~~~~~\% re-coloring with target covariance
\end{algorithmic} 
\end{small}
\label{alg:coral}
\end{algorithm}

One might instead attempt to align the distributions by whitening both source and target. As shown in Figure~\ref{fig:variance}(d), this will fail as the source and target data are likely to lie on different subspaces due to domain shift. An alternative approach would be whitening the target and then re-coloring it with the source covariance. However, as demonstrated in~\cite{outlooks,sasb} and our experiments, transforming data from source to target space gives better performance. 
This might be due to the fact that by transforming the source to target space the classifier was trained using both the label information from the source and the unlabelled structure from the target.

After CORAL transforms the source features to the target space, a classifier $f_{\wSrc}$ parametrized by $\wSrc$ can be trained on the adjusted source features and directly applied to target features. For a linear classifier 
$f_{\wSrc}(I) =  \wSrc^T \phi(I)$,
we can apply an equivalent transformation to the parameter vector $\wSrc$ instead of the features $u$. This results in added efficiency when the number of classifiers is small but the number and dimensionality of target examples is very high.

Since correlation alignment changes the features only, it can be applied to any base classifier. Due to its efficiency, it can also be especially advantageous when the target domains are changing rapidly, e.g., due to scene changes over the course of a long video stream. 

\subsection{Relationship to Existing Methods}
\paragraph{Relationship to Feature Normalization} 
It has long been known that input feature normalization improves many machine learning methods, e.g.,~\cite{batchnorm}. However, CORAL does not simply perform feature normalization, but rather aligns two different distributions. Standard feature normalization (zero mean and unit variance) does not address this issue, as illustrated in Figure~\ref{fig:variance}(a). In this example, although the features are normalized to have zero mean and unit variance in each dimension, the differences in correlations present in the source and target domains cause the distributions to be different. 

\paragraph{Relationship to Manifold Methods} 
 Recent state-of-the-art unsupervised approaches project the source and target distributions into a lower-dimensional manifold and find a transformation that brings the subspaces closer together~\cite{gopalan-iccv11,gfk,sasb,outlooks}. CORAL avoids subspace projection, which can be costly and requires selecting the hyper-parameter that controls the dimensionality of the subspace. We note that subspace-mapping approaches~\cite{outlooks,sasb} only align the top $k$ (subspace dimensionality) eigenvectors of the source and target covariance matrices. On the contrary, CORAL aligns the covariance matrices, which can only be re-constructed using all eigenvectors and eigenvalues. Even though the eigenvectors can be aligned well, the distributions can still differ a lot due to the difference of eigenvalues between the corresponding eigenvectors of the source and target data. CORAL is a more general and much simpler method than the above two as it takes into account~\emph{both} eigenvectors~\emph{and} eigenvalues of the covariance matrix~\emph{without} the burden of subspace dimensionality selection. 

\paragraph{Relationship to MMD methods}
Maximum Mean Discrepancy (MMD) based methods (e.g.,TCA~\cite{tca}, DAN~\cite{dan_long15}) for domain adaptation can be interpreted as ``moment matching'' and can express arbitrary statistics of the data. Minimizing MMD with polynomial kernel ($k(x,y) = (1+x'y)^d$ with $d=2$) is similar to the CORAL objective, however, no previous work has used this kernel for domain adaptation nor proposed a closed form solution to the best of our knowledge. 
The other difference is that MMD based approaches usually apply the~\emph{same} transformation to both the source and target domain.  As demonstrated in~\cite{ref:kulis_cvpr11,outlooks,sasb}, asymmetric transformations are more flexible and often yield better performance for domain adaptation tasks. Intuitively, symmetric transformations find a space that ``ignores'' the differences between the source and target domain while asymmetric transformations try to ``bridge'' the two domains.

\subsection{Application to Deep Neural Networks} 
Suppose $\phi(I)$ was computed by a multilayer neural network, then the inputs to each layer $\phi_k$ can suffer from covariate shift as well. Batch Normalization~\cite{batchnorm} tries to compensate for \emph{internal} covariate shift by normalizing each mini-batch to be zero-mean and unit-variance. However, as illustrated in Figure~\ref{fig:variance}, such normalization might not be enough. Even if used with full whitening, Batch Normalization may not compensate for \emph{external} covariate shift: the layer activations will be decorrelated for a source point but not for a target point. What's more, as mentioned in Section~\ref{subsec:algo}, whitening both domains still does not work. Our method can be easily integrated into a deep architecture by treating layers as features (e.g., fc6 or fc7 of AlexNet~\cite{alexnet}). Although we experiment only with CORAL applied to one hidden layer at each time, multilayer CORAL could be used by implementing the transformations $A_l$ as extra layers which follow each original layer $l$.
\section{Experiments}
\label{sec:exp}

We evaluate our method on object recognition~\cite{saenko2010adapting} and sentiment analysis~\cite{Blitzer07Biographies} with both shallow and deep features, using standard benchmarks and protocols. In all experiments we assume the target domain is unlabeled. 

We follow the standard procedure~\cite{sasb,decaf} and use a linear SVM as the base classifier. The model selection approach of~\cite{sasb} is used to set the $C$ parameter for the SVM by doing cross-validation on the source domain. Since there are no other hyperparameters (except the common regularization parameter $\lambda$ for whitening and coloring, which we discussed in Section~\ref{subsec:algo} and Figure~\ref{fig:sens}) required for our method, the results in this paper can be easily reproduced. To compare to published methods, we use the accuracies reported by their authors or conduct experiments using the source code provided by the authors. 

\subsection{Object Recognition}
\label{subsec:recog}
In this set of experiments, domain adaptation is used to improve the accuracy of an object classifier on novel image domains.
Both the standard Office~\cite{saenko2010adapting} and extended Office-Caltech10~\cite{gfk} datasets are used as benchmarks in this paper. Office-Caltech10 contains 10 object categories from an office environment (e.g., keyboard, laptop, etc.) in 4 image domains: $Webcam$, $DSLR$, $Amazon$, and $Caltech256$. The standard Office dataset contains 31 (the same 10 categories from Office-Caltech10 plus 21 additional ones) object categories in 3 domains: $Webcam$, $DSLR$, and $Amazon$. Later, we also conduct a larger (more data and categories) scale evaluation on Office-Caltech10 and the Cross-Dataset Testbed~\cite{cross_dataset} dataset. 

\begin{table*}\Large
\centering
\resizebox{1.5\columnwidth}{!}{
\begin{tabular}{|l||c|c|c|c|c|c|c|c|c|c|c|c|c|}
\hline
~ & A$\rightarrow$C & A$\rightarrow$D & A$\rightarrow$W & C$\rightarrow$A & C$\rightarrow$D & C$\rightarrow$W & D$\rightarrow$A & D$\rightarrow$C & D$\rightarrow$W & W$\rightarrow$A & W$\rightarrow$C & W$\rightarrow$D & AVG\\ 
\hline
NA & 35.8 & 33.1 & 24.9  &43.7 & 39.4 & 30.0 & 26.4 & 27.1  & 56.4 & 32.3 & 25.7 & 78.9 & 37.8\\ 
\hline
SVMA & 34.8 & 34.1  & 32.5 & 39.1 &  34.5 & 32.9 & 33.4  & 31.4  & 74.4 & 36.6 & 33.5 & 75.0 & 41.0\\ 
\hline
DAM & 34.9 & 34.3  & 32.5 & 39.2 &  34.7 & 33.1 & 33.5  & 31.5  & 74.7 & 34.7 & 31.2 & 68.3 & 40.2\\ 
\hline
GFK & 38.3 & 37.9  & 39.8&44.8 &  36.1 & 34.9 & 37.9  & 31.4  & 79.1 & 37.1 & 29.1 & 74.6 & 43.4\\ 
\hline
TCA & 40.0 & \textbf{39.1}  & \textbf{40.1}  & 46.7 &  \textbf{41.4} & 36.2 & 39.6  & 34.0  & 80.4 & \textbf{40.2} & 33.7 & 77.5 & 45.7\\ 
\hline
SA & 39.9 & 38.8  & 39.6  & 46.1 & 39.4 & 38.9 & \textbf{42.0}  & \textbf{35.0}  & 82.3 & 39.3 & 31.8 & 77.9 & 45.9\\ 
\hline
CORAL & \textbf{40.3} & 38.3 & 38.7 & \textbf{47.2} & 40.7 &\textbf{39.2} & 38.1 & 34.2 & \textbf{85.9} & 37.8 &\textbf{ 34.6} &\textbf{84.9} & \textbf{46.7}\\ 
\hline
\end{tabular}
}
\caption{\small Object recognition accuracies of all 12 domain shifts on the Office-Caltech10 dataset~\cite{gfk} with SURF features, following the protocol of~\cite{gfk,sasb,gopalan-iccv11,ref:kulis_cvpr11,saenko2010adapting}.}
\label{tab:result}
\vspace{-0.1in}
\end{table*}

\subsubsection{Object Recognition with Shallow Features}  
We follow the standard protocol of~\cite{gfk,sasb,gopalan-iccv11,ref:kulis_cvpr11,saenko2010adapting} and conduct experiments on the Office-Caltech10 dataset with shallow features (SURF).
The SURF features were encoded with 800-bin bag-of-words histograms and normalized to have zero mean and unit standard deviation in each dimension.  Since there are four domains, there are 12 experiment settings, namely, A$\rightarrow$C (train classifier on (A)mazon, test on (C)altech), A$\rightarrow$D (train on (A)mazon, test on (D)SLR), A$\rightarrow$W, and so on. We follow the standard protocol and conduct experiments in 20 randomized trials for each domain shift and average the accuracy over the trials. In each trial, we use the standard setting~\cite{gfk,sasb,gopalan-iccv11,ref:kulis_cvpr11,saenko2010adapting} and randomly sample the same number (20 for $Amazon$, $Caltech$, and $Webcam$; 8 for $DSLR$ as there are only 8 images per category in the $DSLR$ domain) of labelled images in the source domain as training set, and use all the unlabelled data in the target domain as the test set. 

In Table~\ref{tab:result}, we compare our method to five recent published methods: SVMA~\cite{svma}, DAM~\cite{ref:duan09}, GFK~\cite{gfk}, SA~\cite{sasb}, and TCA~\cite{tca} as well as the no adaptation baseline (NA). GFK, SA, and TCA are manifold based methods that project the source and target distributions into a lower-dimensional manifold. GFK integrates over an infinite number of subspaces along the subspace manifold using the kernel trick. SA aligns the source and target subspaces by computing a linear map that minimizes the Frobenius norm of their difference. TCA performs domain adaptation via a new parametric kernel using feature extraction methods by projecting data onto the learned transfer components. DAM introduces smoothness assumption to enforce the target classifier share similar decision values with the source classifiers. Even though these methods are far more complicated than ours and require tuning of hyperparameters (e.g., subspace dimensionality), our method achieves the best average performance across all the 12 domain shifts. Our method also improves on the no adaptation baseline (NA), in some cases increasing accuracy significantly (from 56\% to 86\% for D$\rightarrow$W). 

\subsubsection{Object Recognition with Deep Features}  
We follow the standard protocol of~\cite{decaf,tzeng_arxiv15,dan_long15,reversegrad} and conduct experiments on the standard Office dataset with deep features. DLID~\cite{chopra2013dlid} trains a joint source and target CNN architecture with an ``interpolating path'' between the source and target domain. DANN~\cite{DANN} incorporates the Maximum Mean Discrepancy (MMD) measure as a regularization to reduce the distribution mismatch. DA-NBNN~\cite{da_nbnn} presents an NBNN-based domain adaptation algorithm that  iteratively learns a class metric while inducing a large margin separation among classes. DECAF~\cite{decaf} uses  AlexNet~\cite{alexnet} pre-trained on ImageNet~\cite{imagenet} and extracts the fc6 or fc7 layers in the source domains as features to train a classifier. It then applies the classifier to the target domain directly. DDC~\cite{tzeng_arxiv15} adds a domain confusion loss to AlexNet~\cite{alexnet} and fine-tunes it on both the source and target domain.  

DAN~\cite{dan_long15} and ReverseGrad~\cite{reversegrad} are the two most recent domain adaptation approaches based on deep architectures. DAN is similar to DDC but utilizes a multi-kernel selection method for better mean embedding matching and adapts in multiple layers. ReverseGrad introduces a gradient reversal layer to allow direct optimization through back-propagation. Both DDC and ReverseGrad add a new binary classification task by treating the source and target domain as two classes. They maximize the binary classification loss to obtain invariant features. 

To have a fair comparison, we apply CORAL to both the pre-trained AlexNet (CORAL-fc6 and CORAL-fc7) and to AlexNet fine-tuned on the source (CORAL-FT6 and CORAL-FT7). However, the fine-tuning procedures of DDC, DAN, and ReverseGrad are very complicated as there is more than one loss and hyper-parameters are needed to combine them. They also require adding new layers and data from both source and target domains. We use standard fine-tuning on the source domain only to get the baseline NA results (NA-FT6 and NA-FT7). Since there are three domains, there are 6 experiment settings. We follow the protocol of~\cite{decaf,tzeng_arxiv15,reversegrad} and conduct experiments on 5 random training/test splits and get the mean accuracy for each domain shift. 

\begin{table}[t]
\begin{center}
\resizebox{\columnwidth}{!}{
\begin{tabular}{|l||c|c|c|c|c|c|c|}
\hline
~ & A$\rightarrow$D&	A$\rightarrow$W	&D$\rightarrow$A&	D$\rightarrow$W	&W$\rightarrow$A	&W$\rightarrow$D&AVG\\ 
\hline
NA-fc6	&53.2	&48.6	&40.5	&92.9	&39.0	&98.8 &62.2\\
\hline
NA-fc7	&55.7	&50.6	&46.5	&93.1	&43.0	&97.4&64.4\\
\hline
NA-FT6	&54.5	&48.0	&38.9	&91.2	&40.7	&98.9&62.0\\
\hline
NA-FT7	&58.5	&53.0	&43.8	&94.8	&43.7	&99.1&65.5\\
\hline
\hline
SA-fc6	&41.3	&35	&32.3	&74.5	&30.1	&81.5 &49.1 \\
\hline
SA-fc7	&46.2	&42.5	&39.3	&78.9	&36.3	&80.6 &54.0 \\
\hline
SA-FT6	&40.5	&41.1	&33.8	&85.4	&33.4	&88.2 &53.7 \\
\hline
SA-FT7	&50.5	&47.2	&39.6	&89	       &37.3	        &93 &59.4 \\
\hline
\hline
GFK-fc6	&44.8	&37.8	&34.8	&81	&31.4	&86.9 &49.1 \\
\hline
GFK-fc7	&52	&48.2	&41.8	&86.5	&38.6	&87.5 &59.1 \\
\hline
GFK-FT6	&48.8	&45.6	&40.5	&90.4	&36.7	&96.3 &59.7 \\
\hline
GFK-FT7	&56.4	&52.3	&43.2	&92.2	&41.5	&96.6 &63.7 \\
\hline
TCA-fc6	&40.6	&36.8	&32.9	&82.3	&28.9	&84.1 &50.9 \\
\hline
TCA-fc7	&45.4	&40.5	&36.5	&78.2	&34.1	&84 &53.1 \\
\hline
TCA-FT6	&40.8	&37.2	&30.6	&79.5	&36.7	&91.8 &52.8 \\
\hline
TCA-FT7	&47.3	&45.2	&36.4	&80.9	&39.2	&92 &56.8 \\
\hline
\hline
DLID	&-&	26.1&-&		68.9&-&84.9&-\\ 
\hline
DANN &34.0	&34.1&	20.1&	62.0&	21.2&	64.4&39.3\\
\hline
DA-NBNN	&-	&23.3&	-&	67.2&	-&	67.4&-\\
\hline
DECAF-fc6  &-	&52.2&	-&	91.5&	-&	-&-\\
\hline
DECAF-fc7  &-	&53.9&	-&	89.2&	-&	-&-\\
\hline
DDC	 &-	&59.4	&-	&92.5	&-	&91.7  &- \\
\hline
DAN	 &-	&66.0	&-	&93.5	&-	&95.3  &- \\
\hline
ReverseGrad	 &-	&\textbf{67.3}	&-	&94.0	&-	&93.7  &- \\
\hline
\hline
CORAL-fc6	&53.7	&48.4	&44.4	&96.5	&41.9	&99.2&64.0\\
\hline
CORAL-fc7	&57.1	&53.1	&\textbf{51.1}	&94.6	&47.3	&98.2&66.9\\
\hline
CORAL-FT6	&61.2	&59.8	&47.4	&\textbf{97.1}	&45.8	&\textbf{99.5}&68.5\\
\hline
CORAL-FT7	&\textbf{62.2}	&61.9	&48.4	&96.2	&\textbf{48.2}	&\textbf{99.5}&\textbf{69.4}\\
\hline
\end{tabular}
}
\end{center}
\caption{\small Object recognition accuracies of all 6 domain shifts on the standard Office dataset~\cite{saenko2010adapting} with deep features, following the protocol of~\cite{decaf,tzeng_arxiv15,reversegrad}. }
\label{tab:result_office10}
\vspace{-0.1in}
\end{table}

\begin{table*}\small
\centering
\resizebox{1.8\columnwidth}{!}{
\begin{tabular}{|l||c|c|c|c|c|c|c|c|c|c|c|c|c|}
\hline
~ & A$\rightarrow$C & A$\rightarrow$D & A$\rightarrow$W & C$\rightarrow$A & C$\rightarrow$D & C$\rightarrow$W & D$\rightarrow$A & D$\rightarrow$C & D$\rightarrow$W & W$\rightarrow$A & W$\rightarrow$C & W$\rightarrow$D & AVG\\ 
\hline
NA & 41.7 & \textbf{44.6} & 31.9  & 53.1 & 47.8 & 41.7 & 26.2 & 26.4 & 52.5 & 27.6 & 21.2 & 78.3 & 41.1\\ 
\hline
SA & 37.4 & 36.3 & 39.0  & 44.9 & 39.5 & 41.0 & 32.9 & 34.3 & 65.1 & 34.4 & 31.0 & 62.4 & 41.5\\ 
\hline
GFK &41.9 & 41.4 & 41.4 & \textbf{56.0}  & 42.7 & 45.1 & \textbf{38.7} & \textbf{36.5} & 74.6 & 31.9 & 27.5 & 79.6 &46.4\\ 
\hline
TCA &35.2 & 39.5 & 29.5  & 46.8 & \textbf{52.2} & 38.6 & 36.2 & 30.1 & 71.2 & 32.2 & 27.9 & 74.5 &42.8\\ 
\hline
CORAL & \textbf{45.1} & 39.5 & \textbf{44.4}  & 52.1 & 45.9 & \textbf{46.4}  & 37.7 & 33.8 & \textbf{84.7} & \textbf{36.0} & \textbf{33.7} & \textbf{86.6} & \textbf{48.8}\\ 
\hline
\end{tabular}
}
\caption{\small Object recognition accuracies of all 12 domain shifts on the Office-Caltech10 dataset~\cite{gfk} with SURF features, using the~\emph{``full training''} protocol.}
\label{tab:result_office10_large}
\vspace{-0.1in}
\end{table*}

\begin{table}\small
\begin{center}
\resizebox{0.95\columnwidth}{!}{
\begin{tabular}{|l||c|c|c|c|c|c|c|}
\hline
~ & C$\rightarrow$I&	C$\rightarrow$S	&I$\rightarrow$C&	I$\rightarrow$S	&S$\rightarrow$C	&S$\rightarrow$I&AVG\\ 
\hline
NA	&66.1	 &21.9	 &73.8	&22.4	 &24.6	&22.4  & 38.5\\
\hline
SA	&43.7	&13.9	&52.0	&15.1	&15.8	&14.3  &25.8\\
\hline
GFK	&52   	&18.6        &58.5	&20.1	&21.1       &17.4      &31.3\\
\hline
TCA	&48.6	&15.6	&54.0   	&14.8	&14.6	&12.0      &26.6\\
\hline
CORAL &\textbf{66.2}	&\textbf{22.9}	&\textbf{74.7}	&\textbf{25.4}	&\textbf{26.9} 	&\textbf{25.2}     &\textbf{40.2}\\
\hline
\end{tabular}
}
\end{center}
\caption{\small Object recognition accuracies of all 6 domain shifts on the Testbed Cross-Dataset~\cite{cross_dataset} dataset with DECAF-fc7 features, using the~\emph{``full training''} protocol.}
\label{tab:result_cross}
\vspace{-0.1in}
\end{table}

\begin{table}[t]
\centering
\resizebox{0.7\columnwidth}{!}{
\begin{tabular}{|l||c|c|c|c|c|}
\hline
~ & K$\rightarrow$D&	D$\rightarrow$B	& B$\rightarrow$E&	E$\rightarrow$K &AVG\\ 
\hline
NA	&72.2 & 76.9 & 74.7 & 82.8& 76.7\\ 
\hline
TCA	&60.4 & 61.4 & 61.3 & 68.7& 63.0\\ 
\hline
SA	&\textbf{78.4} & 74.7 & 75.6 & 79.3& 77.0\\ 
\hline
GFS &67.9 & 68.6 & 66.9 & 75.1& 69.6\\ 
\hline
GFK &69.0 & 71.3 & 68.4 & 78.2& 71.7\\ 
\hline
SCL	&72.8 & 76.2 & 75.0 & 82.9 & 76.7\\ 
\hline
KMM &72.2 & \textbf{78.6} & \textbf{76.9} & 83.5  & 77.8\\ 
\hline
CORAL &73.9 & 78.3 &76.3 & \textbf{83.6}& \textbf{78.0}\\ 
\hline
\end{tabular}
}
\caption{\small Review classification accuracies of the 4 standard domain shifts~\cite{gong-icml13} on the Amazon dataset~\cite{Blitzer07Biographies} with bag-of-words features.}
\label{tab:result_nlp}
\vspace{-0.1in}
\end{table}

In Table~\ref{tab:result_office10} we compare our method to the 11 baseline methods discussed before. Again, our method outperforms all of these techniques in almost all cases, sometimes by a very large margin. Note that most of the deep structures based methods report results only on some settings. We find that the higher level fc7/FT7 features lead to better performance than fc6/FT6. What's more, the NA baselines also achieve very good performance, even better than all the manifold methods and some deep methods. However, CORAL outperforms it consistently and is the only method achieves better AVG performance across all the 6 shifts. It also achieves better peformance than the two latest deep methods (DAN and ReverseGrad) in 2 out of the 3 shifts they reported.

One interesting finding is that, although fine-tuning on the source domain only (NA-FT6 and NA-FT7) does not achieve better performance on the target domain compared to the pre-trained network (NA-fc6 and NA-fc7), applying CORAL to the fine-tuned network (CORAL-FT6 and CORAL-FT7) achieves much better performance than applying CORAL to the pre-trained network (CORAL-fc6 and CORAL-fc7). One possible explanation is that the pre-trained network might be underfitting while the fine-tuned network is overfitting. Since CORAL aligns the source feature distribution to target distribution, overfitting becomes less of a problem. 

\subsubsection{A Larger Scale Evaluation}
\label{subsubsec:larger}
In this section, we repeat the evaluation on a larger scale. We conduct two sets of experiments to investigate how the dataset size and number of classes will affect the performance of domain adaptation methods. In both sets of experiments, we use the~\emph{``full training''} protocol, where all the source data are used for training, compared to the standard subsampling protocol in the previous two sections. Since all the target data are used in the previous two sections, the only difference between these two settings is the training dataset size of the source domain. To have a direct comparison to Table~\ref{tab:result}, we conduct the first set of experiments on the Office-Caltech10 dataset with SURF features. To investigate the effect of the number of classes, we conduct the second set of experiments on the Cross-Dataset Testbed~\cite{cross_dataset} dataset, with 3847 images for Caltech256~\cite{caltech256}, 4000 images for ImageNet~\cite{imagenet}, and 2626 images for SUN~\cite{sun_data} over 40 classes, using the only publicly available deep features (DECAF-fc7). 

In Tables~\ref{tab:result_office10_large} and~\ref{tab:result_cross}, we compare CORAL to SA, GFK, TCA which have available source code as well as the NA baseline. Table~\ref{tab:result_office10_large} shows the result of the Office-Caltech10 dataset and Table~\ref{tab:result_cross} shows the result on the Cross-Dataset Testbed dataset. In both experiments, CORAL outperforms all the baseline methods and again the margin on deep features is much larger than on shallow features. Comparing Table~\ref{tab:result_office10_large} to Table~\ref{tab:result}, we can say that the performance difference between NA and other methods is smaller as more source data is used. This may be due to the fact that as more training data is used, the classifier is stronger and can generalize better to other domains.

\subsection{Sentiment Analysis}
We also evaluate our method on sentiment analysis using the standard Amazon review dataset~\cite{Blitzer07Biographies,gong-icml13}. We use the processed data from~\cite{gong-icml13}, in which the dimensionality of the bag-of-words features was reduced to keep the top 400 words without losing performance. This dataset contains Amazon reviews on 4 domains: Kitchen appliances, DVD, Books, and Electronics. For each domain, there are 1000 positive and 1000 negative reviews. We follow the standard protocol of~\cite{gong-icml13} and conduct experiments on 20 random training/test splits and report the mean accuracy for each domain shift. 

In Table~\ref{tab:result_nlp}, we compare our method to five 
published methods: TCA~\cite{tca}, GFS~\cite{gopalan-iccv11}, GFK~\cite{gfk}, SCL~\cite{scl}, and KMM~\cite{huang_nips06} as well as the no adaptation baseline (NA). GFS is a precursor of GFK and interpolates features using a finite number of subspaces. SCL introduces structural correspondence learning to automatically induce correspondences among features from different domains. KMM presents a nonparametric method to directly produce re-sampling weights without distribution estimation. One interesting observation is that, for this sentiment analysis task, three state-of-the-art methods (TCA, GFS, and GFK) actually perform worse than the no adaptation baseline (NA). Despite the difficulty of this task, CORAL still performs well and achieves the best average classification accuracy across the 4 standard domain shifts.

\section{Discussion}
One interesting result is that the margin between CORAL and other published methods is much larger on deep features (e.g. 64.0 of CORAL-fc6 compared to 49.1 of SA-fc6 in Table~\ref{tab:result_office10}) than on bag-of-words features. This could be because deep features are more strongly correlated than bag-of-words features (e.g. the largest singular value of the covariance matrix of Amazon-fc6 is 354 compared to 27 of Amazon-SURF). Similarly, the improvement on images (Tables 1-4) is much larger than text (Table~\ref{tab:result_nlp}), possibly because bag-of-words text features are extremely sparse and less correlated than image features.
As demonstrated in~\cite{deep_vis}, high level deep features are more ``parts'' or ``objects'. Intuitively, ``parts'' or ``objects'' should be more strongly correlated than ``edges'' (e.g., arm and head of a person are more likely to appear jointly).

These findings suggest that CORAL is extremely valuable in the era of deep learning. Applying CORAL to deep text features is part of future work.
\section{Conclusion}
\label{sec:concl}
In this article, we proposed an simple, efficient and effective method for domain adaptation. The method is ``frustratingly easy'' to implement: the only computation involved is re-coloring the whitened source features with the covariance of the target domain. 

Extensive experiments on standard benchmarks demonstrate the superiority of our method over many existing state-of-the-art methods. These results confirm that CORAL is applicable to multiple features types, including highly-performing deep features, and to different tasks, including computer vision and natural language processing. 
\section{Acknowledgments}
\label{sec:ackn}
The authors would like to thank Mingsheng Long, Judy Hoffman, and Trevor Darrell for helpful discussions and suggestions; the reviewers for their valuable comments. The Tesla K40 used for this research was donated by the NVIDIA Corporation. This research was supported by NSF Awards IIS-1451244 and IIS-1212928. 
\bibliography{main}

\begin{thebibliography}{}

\bibitem[\protect\citeauthoryear{Ben-David \bgroup et al\mbox.\egroup
  }{2007}]{bendavid}
Ben-David, S.; Blitzer, J.; Crammer, K.; and Pereira, F.
\newblock 2007.
\newblock Analysis of representations for domain adaptation.
\newblock In {\em NIPS}.

\bibitem[\protect\citeauthoryear{Blitzer, Dredze, and
  Pereira}{2007}]{Blitzer07Biographies}
Blitzer, J.; Dredze, M.; and Pereira, F.
\newblock 2007.
\newblock {Biographies, Bollywood, Boom-boxes and Blenders: Domain Adaptation
  for Sentiment Classification}.
\newblock In {\em ACL}.

\bibitem[\protect\citeauthoryear{Blitzer, McDonald, and Pereira}{2006}]{scl}
Blitzer, J.; McDonald, R.; and Pereira, F.
\newblock 2006.
\newblock Domain adaptation with structural correspondence learning.
\newblock In {\em EMNLP}.

\bibitem[\protect\citeauthoryear{Borgwardt \bgroup et al\mbox.\egroup
  }{2006}]{mmd}
Borgwardt, K.~M.; Gretton, A.; Rasch, M.~J.; Kriegel, H.-P.; Sch{\"o}lkopf, B.;
  and Smola, A.~J.
\newblock 2006.
\newblock Integrating structured biological data by kernel maximum mean
  discrepancy.
\newblock In {\em Bioinformatics}.

\bibitem[\protect\citeauthoryear{Cai, Cand\`{e}s, and Shen}{2010}]{SVT}
Cai, J.-F.; Cand\`{e}s, E.~J.; and Shen, Z.
\newblock 2010.
\newblock A singular value thresholding algorithm for matrix completion.
\newblock {\em SIAM J. on Optimization} 20(4):1956--1982.

\bibitem[\protect\citeauthoryear{Caseiro \bgroup et al\mbox.\egroup
  }{2015}]{Sp_CVPR15}
Caseiro, R.; Henriques, J.~F.; Martins, P.; and Batista, J.
\newblock 2015.
\newblock Beyond the shortest path : Unsupervised domain adaptation by sampling
  subspaces along the spline flow.
\newblock In {\em CVPR}.

\bibitem[\protect\citeauthoryear{Chopra, Balakrishnan, and
  Gopalan}{2013}]{chopra2013dlid}
Chopra, S.; Balakrishnan, S.; and Gopalan, R.
\newblock 2013.
\newblock Dlid: Deep learning for domain adaptation by interpolating between
  domains.
\newblock In {\em ICML Workshop}.

\bibitem[\protect\citeauthoryear{{Daume~III}}{2007}]{daume}
{Daume~III}, H.
\newblock 2007.
\newblock Frustratingly easy domain adaptation.
\newblock In {\em ACL}.

\bibitem[\protect\citeauthoryear{Deng \bgroup et al\mbox.\egroup
  }{2009}]{imagenet}
Deng, J.; Dong, W.; Socher, R.; Li, L.-J.; Li, K.; and Fei-Fei, L.
\newblock 2009.
\newblock Imagenet: A large-scale hierarchical image database.
\newblock In {\em CVPR}.

\bibitem[\protect\citeauthoryear{Donahue \bgroup et al\mbox.\egroup
  }{2014}]{decaf}
Donahue, J.; Jia, Y.; Vinyals, O.; Hoffman, J.; Zhang, N.; Tzeng, E.; and
  Darrell, T.
\newblock 2014.
\newblock Decaf: A deep convolutional activation feature for generic visual
  recognition.
\newblock In {\em ICML}.

\bibitem[\protect\citeauthoryear{Duan \bgroup et al\mbox.\egroup
  }{2009}]{ref:duan09}
Duan, L.; Tsang, I.~W.; Xu, D.; and Chua, T.
\newblock 2009.
\newblock Domain adaptation from multiple sources via auxiliary classifiers.
\newblock In {\em ICML}.

\bibitem[\protect\citeauthoryear{Duan, Tsang, and Xu}{2012}]{svma}
Duan, L.; Tsang, I.~W.; and Xu, D.
\newblock 2012.
\newblock Domain transfer multiple kernel learning.
\newblock {\em TPAMI} 34(3):465--479.

\bibitem[\protect\citeauthoryear{Fernando \bgroup et al\mbox.\egroup
  }{2013}]{sasb}
Fernando, B.; Habrard, A.; Sebban, M.; and Tuytelaars, T.
\newblock 2013.
\newblock Unsupervised visual domain adaptation using subspace alignment.
\newblock In {\em ICCV}.

\bibitem[\protect\citeauthoryear{Ganin and Lempitsky}{2015}]{reversegrad}
Ganin, Y., and Lempitsky, V.
\newblock 2015.
\newblock Unsupervised domain adaptation by backpropagation.
\newblock In {\em ICML}.

\bibitem[\protect\citeauthoryear{Ghifary, Kleijn, and Zhang}{2014}]{DANN}
Ghifary, M.; Kleijn, W.~B.; and Zhang, M.
\newblock 2014.
\newblock Domain adaptive neural networks for object recognition.
\newblock In {\em PRICAI}.

\bibitem[\protect\citeauthoryear{Gong \bgroup et al\mbox.\egroup }{2012}]{gfk}
Gong, B.; Shi, Y.; Sha, F.; and Grauman, K.
\newblock 2012.
\newblock Geodesic flow kernel for unsupervised domain adaptation.
\newblock In {\em CVPR}.

\bibitem[\protect\citeauthoryear{Gong, Grauman, and Sha}{2013}]{gong-icml13}
Gong, B.; Grauman, K.; and Sha, F.
\newblock 2013.
\newblock Connecting the dots with landmarks: Discriminatively learning
  domain-invariant features for unsupervised domain adaptation.
\newblock In {\em ICML}.

\bibitem[\protect\citeauthoryear{Gopalan, Li, and
  Chellappa}{2011}]{gopalan-iccv11}
Gopalan, R.; Li, R.; and Chellappa, R.
\newblock 2011.
\newblock Domain adaptation for object recognition: An unsupervised approach.
\newblock In {\em ICCV}.

\bibitem[\protect\citeauthoryear{Gregory, Alex, and Pietro}{2007}]{caltech256}
Gregory, G.; Alex, H.; and Pietro, P.
\newblock 2007.
\newblock Caltech 256 object category dataset.
\newblock In {\em Tech. Rep. UCB/CSD-04-1366, California Institue of
  Technology}.

\bibitem[\protect\citeauthoryear{Harel and Mannor}{2011}]{outlooks}
Harel, M., and Mannor, S.
\newblock 2011.
\newblock Learning from multiple outlooks.
\newblock In {\em ICML}.

\bibitem[\protect\citeauthoryear{Hariharan, Malik, and Ramanan}{2012}]{who}
Hariharan, B.; Malik, J.; and Ramanan, D.
\newblock 2012.
\newblock Discriminative decorrelation for clustering and classification.
\newblock In {\em ECCV}.

\bibitem[\protect\citeauthoryear{Huang and Wang}{2013}]{dict_2}
Huang, D.-A., and Wang, Y.-C.
\newblock 2013.
\newblock Coupled dictionary and feature space learning with applications to
  cross-domain image synthesis and recognition.
\newblock In {\em ICCV}.

\bibitem[\protect\citeauthoryear{Huang \bgroup et al\mbox.\egroup
  }{2006}]{huang_nips06}
Huang, J.; Smola, A.~J.; Gretton, A.; Borgwardt, K.~M.; and Sch{\"o}lkopf, B.
\newblock 2006.
\newblock Correcting sample selection bias by unlabeled data.
\newblock In {\em NIPS}.

\bibitem[\protect\citeauthoryear{Ioffe and Szegedy}{2015}]{batchnorm}
Ioffe, S., and Szegedy, C.
\newblock 2015.
\newblock Batch normalization: Accelerating deep network training by reducing
  internal covariate shift.
\newblock In {\em ICML}.

\bibitem[\protect\citeauthoryear{Jiang and Zhai}{2007}]{jiang-zhai07}
Jiang, J., and Zhai, C.
\newblock 2007.
\newblock {Instance Weighting for Domain Adaptation in NLP}.
\newblock In {\em ACL}.

\bibitem[\protect\citeauthoryear{Krizhevsky, Sutskever, and
  Hinton}{2012}]{alexnet}
Krizhevsky, A.; Sutskever, I.; and Hinton, G.~E.
\newblock 2012.
\newblock Imagenet classification with deep convolutional neural networks.
\newblock In {\em NIPS}.

\bibitem[\protect\citeauthoryear{Kulis, Saenko, and
  Darrell}{2011}]{ref:kulis_cvpr11}
Kulis, B.; Saenko, K.; and Darrell, T.
\newblock 2011.
\newblock What you saw is not what you get: Domain adaptation using asymmetric
  kernel transforms.
\newblock In {\em CVPR}.

\bibitem[\protect\citeauthoryear{Long \bgroup et al\mbox.\egroup
  }{2014}]{long_cvpr}
Long, M.; Wang, J.; Ding, G.; Sun, J.; and Yu, P.
\newblock 2014.
\newblock Transfer joint matching for unsupervised domain adaptation.
\newblock In {\em CVPR}.

\bibitem[\protect\citeauthoryear{Long \bgroup et al\mbox.\egroup
  }{2015}]{dan_long15}
Long, M.; Cao, Y.; Wang, J.; and Jordan, M.~I.
\newblock 2015.
\newblock Learning transferable features with deep adaptation networks.
\newblock In {\em ICML}.

\bibitem[\protect\citeauthoryear{Mahendran and Vedaldi}{2015}]{deep_vis}
Mahendran, A., and Vedaldi, A.
\newblock 2015.
\newblock Understanding deep image representations by inverting them.
\newblock In {\em CVPR}.

\bibitem[\protect\citeauthoryear{Pan \bgroup et al\mbox.\egroup }{2009}]{tca}
Pan, S.~J.; Tsang, I.~W.; Kwok, J.~T.; and Yang, Q.
\newblock 2009.
\newblock Domain adaptation via transfer component analysis.
\newblock In {\em IJCAI}.

\bibitem[\protect\citeauthoryear{Saenko \bgroup et al\mbox.\egroup
  }{2010}]{saenko2010adapting}
Saenko, K.; Kulis, B.; Fritz, M.; and Darrell, T.
\newblock 2010.
\newblock Adapting visual category models to new domains.
\newblock In {\em ECCV}.

\bibitem[\protect\citeauthoryear{Shekhar \bgroup et al\mbox.\egroup
  }{2013}]{dict_1}
Shekhar, S.; Patel, V.~M.; Nguyen, H.~V.; and Chellappa, R.
\newblock 2013.
\newblock Generalized domain-adaptive dictionaries.
\newblock In {\em CVPR}.

\bibitem[\protect\citeauthoryear{Sun and Saenko}{2014}]{bmvc}
Sun, B., and Saenko, K.
\newblock 2014.
\newblock From virtual to reality: Fast adaptation of virtual object detectors
  to real domains.
\newblock In {\em BMVC}.

\bibitem[\protect\citeauthoryear{Tommasi and Caputo}{2013}]{da_nbnn}
Tommasi, T., and Caputo, B.
\newblock 2013.
\newblock Frustratingly easy {NBNN} domain adaptation.
\newblock In {\em ICCV}.

\bibitem[\protect\citeauthoryear{Tommasi and Tuytelaars}{2014}]{cross_dataset}
Tommasi, T., and Tuytelaars, T.
\newblock 2014.
\newblock A testbed for cross-dataset analysis.
\newblock In {\em ECCV TASK-CV Workshop}.

\bibitem[\protect\citeauthoryear{Torralba and Efros}{2011}]{efros-cvpr11}
Torralba, A., and Efros, A.~A.
\newblock 2011.
\newblock Unbiased look at dataset bias.
\newblock In {\em CVPR}.

\bibitem[\protect\citeauthoryear{Tzeng \bgroup et al\mbox.\egroup
  }{2014}]{tzeng_arxiv15}
Tzeng, E.; Hoffman, J.; Zhang, N.; Saenko, K.; and Darrell, T.
\newblock 2014.
\newblock Deep domain confusion: Maximizing for domain invariance.
\newblock {\em CoRR} abs/1412.3474.

\bibitem[\protect\citeauthoryear{Xiao \bgroup et al\mbox.\egroup
  }{2010}]{sun_data}
Xiao, J.; Hays, J.; Ehinger, K.~A.; Oliva, A.; and Torralba, A.
\newblock 2010.
\newblock Sun database: Large-scale scene recognition from abbey to zoo.
\newblock In {\em CVPR}.

\bibitem[\protect\citeauthoryear{Yang, Yan, and Hauptmann}{2007}]{yang_icdm07}
Yang, J.; Yan, R.; and Hauptmann, A.
\newblock 2007.
\newblock Adapting {SVM} classifiers to data with shifted distributions.
\newblock In {\em ICDM Workshop}.

\end{thebibliography}
\bibliographystyle{aaai}

\end{document}